\newtheorem{theorem}{Theorem}
\newtheorem{definition}[theorem]{Definition}
\newcommand{\figref}[1]{Fig.~\ref{#1}}
\newcommand{\Figref}[1]{Fig.~\ref{#1}}
\newcommand{\defref}[1]{Definition~\ref{#1}}
\newcommand{\thmref}[1]{Theorem~\ref{#1}}
\newcommand{\secref}[1]{Section~\ref{#1}}
\newcommand{\RR}{\mathbb{R}}
\newcommand{\RRnonneg}{\RR_{\ge 0}}
\newcommand{\transpose}{{}^{T}} 
\newcommand{\qcpm}{q_\text{cpm}}
\newcommand{\Obj}{q}
\newcommand{\indicator}{\mathbf{1}}
\DeclareMathOperator{\symdiff}{\triangle}
\DeclareMathOperator*{\argmax}{argmax}
\DeclareMathOperator{\support}{supp}
\tikzset{
  node/.style={circle,draw,inner sep=0,minimum size=5mm},
  lbl/.style={fill=white,inner sep=0.2mm,minimum size=1mm},
}
\pgfplotsset{compat=newest} 
\theoremstyle:=definition,remark,plain\do{%
        \expandafter\g@addto@macro\csname th@\theoremstyle\endcsname{%
            \addtolength\thm@preskip\parskip
            }%
        }
\begin{document}

\title{Resolution-limit-free and local Non-negative Matrix Factorization quality functions for graph clustering}

\author{Twan van Laarhoven}\email{tvanlaarhoven@cs.ru.nl}
\author{Elena Marchiori}\email{elenam@cs.ru.nl}
\affiliation{Institute for Computing and Information Sciences, Radboud University Nijmegen, The Netherlands}
\pacs{89.75.Hc,05.10.-a,02.70.Rr}

\begin{abstract}
Many graph clustering quality functions suffer from a resolution limit, the inability to find small clusters in large graphs.
So called resolution-limit-free quality functions do not have this limit. This property was previously introduced for hard clustering, that is, graph partitioning.

We investigate the resolution-limit-free property in the context of Non-negative Matrix Factorization (NMF) for hard and soft graph clustering. 
To use NMF in the hard clustering setting, a common approach is to assign each node to its highest membership cluster.
We show that in this case symmetric NMF is not resolution-limit-free,
 but that it becomes so when hardness constraints are used as part of the optimization. The resulting function is strongly linked to the Constant Potts
Model.
In soft clustering, nodes can belong to more than one cluster, with varying degrees of membership. In this setting resolution-limit-free turns out to be too strong a property. Therefore we introduce \emph{locality}, which roughly states that changing one part of the graph does not affect the clustering of other parts of the graph.
We argue that this is a desirable property, provide conditions under which NMF quality functions are local, and propose a novel class of local probabilistic NMF quality functions for soft graph clustering.

\end{abstract}

\maketitle

\section{Introduction}

Graph clustering, also known as network community detection,  is an important problem with  real-life applications in diverse disciplines such as life and social sciences \citep{Schaeffer2007,Fortunato2010}.  Graph clustering  is often performed by optimizing  a quality function, that is a function that assigns a score to a clustering. During the last decades, many such functions (and algorithms to optimize them) have been proposed. However, relatively little effort has been devoted to the theoretical foundation of graph clustering quality functions, e.g. \cite{AckermanBen-David2008axioms}.  In this paper we try to provide a contribution in this direction by studying desirable locality properties of quality functions for hard and soft graph clustering. 

We focus on the resolution-limit-free property, a  property of  hard graph clustering, recently introduced by \citet*{Traag2011ResolutionLimitScope}.
Resolution-limit-freeness is essentially a locality property. 
Informally this property states that a subset of an optimal clustering in the original graph should also be an optimal clustering in the induced subgraph containing only the nodes in the subset of clusters.
As the name suggests, resolution-limit-free quality functions do not suffer from the so-called resolution limit,  that is,  the inability to find small clusters in large graphs.   In the seminal work by \citet{Fortunato2007ResolutionLimit}, it was shown that modularity \citep{NewmanGirvan2004}, a popular quality function used for network community detection, has a resolution limit, in the sense that it may not detect clusters smaller than a scale which depends on the total size of the network and on the degree of interconnectedness of the clusters.  

Our goal is to investigate resolution-limit-freeness and other locality properties of Non-negative Matrix Factorization (NMF) graph clustering quality functions.
NMF \citep{Paatero1994,Lee1999} is a popular machine learning method initially used to  learn the parts of objects, like human faces and text documents. It finds two non-negative matrices whose product provides a good approximation to the input matrix. 
The non-negative constraints lead to a parts-based representation because they allow only additive, not subtractive, combinations.
Recently NMF formulations have been proposed as quality functions for graph clustering,  see for instance the surveys \citet{Wang2011} and \citet{LiDing2013nmfsurvey}.

We consider symmetric and asymmetric NMF formulations based on Euclidean loss and a Bayesian NMF quality function recently proposed by \citet{Psorakis2011NMF}, which can automatically determine the number of clusters.

The resolution-limit-free property is stated in the setting of hard clustering, where a clustering is a partition of the nodes. In contrast, NMF produces a soft clustering. Nodes have varying degrees of memberships of each clusters, and the clusters can overlap.
To use NMF in the hard clustering setting, a common approach is to assign each node to its highest membership cluster. 

%
In \secref{sec:hard}
we show that hard clustering based on NMF in this way is, in general, not resolution-limit-free.
For symmetric NMF we show that resolution-limit-freeness can be obtained by using orthogonality constraints as part of the optimization, and that the resulting function is strongly linked to the Constant Potts Model (CPM). CPM was introduced by \citeauthor{Traag2011ResolutionLimitScope} as the simplest formulation of a (non-trivial) resolution-limit-free method. It is a variant of the Potts model by \citet{Reichardt2004}.

We argue in \secref{sec:soft} that in the soft clustering setting, resolution-limit-freeness is a too strong property and propose an alternative desirable locality property  for soft graph clustering.
We characterize an interesting class of local quality functions and  show that symmetric and asymmetric NMF belong to this class. 
We show that Bayesian NMF is not local in general and that it suffers from a resolution limit. In \secref{sec:probabilistic} we introduce a novel class of probabilistic NMF quality functions that are local, and hence do not suffer from a resolution limit.

%
\subsection{Related work}
\label{sec:related-work}

The notion of resolution limit was introduced in \citet{Fortunato2007ResolutionLimit}, which detected a limitation of modularity, considered a state-of-the-art method for community detection.
\Citet{vanLaarhoven2013lso} showed empirically that the resolution limit is the most important difference between quality functions in graph clustering optimized using a fast local search algorithm, the Louvain method \citep{Blondel2008}. \Citet{Traag2011ResolutionLimitScope} introduced the notion of resolution-limit-free objective functions, which provides the motivation of this study.

Other local properties of quality functions for clustering have been considered in theoretical studies but mainly in the hard setting,  for distance based  clustering \citep{AckermanBenDavidLokerCOLT2010} and for graph clustering  \citep{vanLaarhoven2014axioms}.
Locality as defined in \citet{AckermanBenDavidLokerCOLT2010}  is a property of clustering functions, therein defined as functions mapping a data set and a positive integer $k$ to a partition of the data into $k$ clusters. This notion of locality was used together with other properties to characterize linkage based clustering.  The locality property considered in \citet{vanLaarhoven2014axioms} is part of an axiomatic study of quality functions for hard graph clustering.  It states that local changes to a graph should have only local consequences to a clustering. It is slightly weaker than the locality property considered in this study, which corresponds more closely to the property there called strong locality.

%

\subsection{Definitions and Notation}
\label{sec:definitions-and-notation}

A (weighted) \emph{graph} is a pair $(V,A)$ of a finite set $V$ of nodes and a function $A : V \times V \to \RRnonneg$ of edge weights. For compactness we view $A$ as an adjacency matrix, and write $a_{ij} = A(i,j)$.
Edges with larger weights represent stronger connections, so $a_{ij}=0$ means that there is no edge between nodes $i$ and $j$.

A graph $G'=(V',A')$ is a \emph{subgraph} of $G=(V,A)$
if $V' \subseteq V$ and $a'_{ij} = a_{ij}$ for all $i,j \in V'$.

Different clustering methods use different notions of a `cluster' and `clustering'.
For instance, in symmetric NMF a clustering is a matrix of membership coefficients;
while in non-symmetric NMF there are two such matrices. Some methods also have additional parameters for each cluster.

We therefore keep the notion of cluster abstract for now.
All we require is a function $\support$ from clusters to sets of nodes, called the \emph{support}.
We take a \emph{clustering} to be a set of clusters.
The support of a clustering $C$, written $\support(C)$, is the union of the support of all clusters in that clustering.
If the support of a clustering $C$ is a subset of a set $V$ of nodes, then we say that $C$ is a \emph{clustering of} $V$.
And for brevity, we also say that $C$ is a clustering of a graph $G$ if $C$ is a clustering of the nodes of $G$.

Note that this definition implies that if $C$ and $D$ are clusterings of $G$, then $C \cup D$ is also a clustering of $G$. And if $G$ is a subgraph of $G'$, then $C$ and $D$ are also clusterings of $G'$.

The symmetric difference of two clusterings is denoted $C \symdiff D$, and is defined as the symmetric difference of sets, that is $C \symdiff D = (C \cup D) \setminus (C \cap D)$.

Graph clustering can be cast as an optimization problem. The objective that is being optimized is the \emph{clustering quality function}, which is a function from graphs $G$ and clusterings of $G$ to real numbers. In this paper we take the convention that the quality is maximized.

Given a clustering quality function $q$, and a clustering $C$ of some graph $G$. We say that $C$ is \emph{$q$-optimal} if $q(G,C) \ge q(G,C')$ for all clusterings $C'$ of $G$.


\section{Non-negative Matrix Factorization}
\label{sec:nmf}

At its core, Non-negative Matrix Factorization  decomposes a matrix $A$ as a product $A \approx W H\transpose$, where all entries in $W$ and $H$ are non-negative.
For graph clustering the matrix $A$ is the adjacency matrix of a graph.
For undirected graphs the adjacency matrix is symmetric, in which case it makes sense to decompose it as $A \approx H H\transpose$. Since the optimal non-symmetric factorization of a symmetric matrix does not necessarily have $W=H$ \citep{Catral04}.

The columns of $W$ and $H$ can be interpreted as clusters.
To fit with the definition of `cluster' from the previous paragraph we need to take a slightly different view.
In the case of symmetric NMF, a cluster $c$ consists of a function $h_c$ from nodes to non-negative real numbers, called the membership coefficients. For a fixed set of nodes $h_c$ can be represented as a vector, and a set of such vectors can be seen as a matrix. The support of such a cluster $c$ is the support of the function $h_c$, that is, the set of nodes for which the membership is positive.
For non-symmetric NMF, a cluster is a tuple $c=(w_c,h_c)$ of two such functions $w_c$ and $h_c$; and for Bayesian NMF \citep{Psorakis2011NMF} each cluster also contains a $\beta_c$ parameter.

A common notion to all NMF methods is that they predict a value for each edge.
For symmetric NMF with per cluster membership $h_c$ this prediction can be written as
  $\hat{a}_{ij} = \sum_{c \in C}h_{ci} h_{cj}$.
For asymmetric NMF with cluster memberships $w_c$ and $h_c$ we can write
  $\hat{a}_{ij} = \sum_{c \in C}w_{ci} h_{cj}$.

The optimization problem then tries to ensure that $\hat{a}_{ij} \approx a_{ij}$. Different methods can have different interpretations of the `$\approx$' symbol, and they impose different regularizations and perhaps additional constraints.
Perhaps the simplest NMF quality function for undirected graphs uses Euclidean distance and no additional regularization,
\begin{equation*}
  q_\text{SymNMF}(G,C) = -\frac{1}{2} \sum_{i,j \in V} (a_{ij} - \hat{a}_{ij})^2.
\end{equation*}

%

%

\section{Resolution-limit-free functions for hard clustering}
\label{sec:hard}


Before we investigate the resolution limits of NMF,
we will first look at traditional `hard' clustering, where each node belongs to exactly one cluster.
In this setting a cluster is simply a subset of the nodes, and its support is the cluster itself.
There is the additional non-overlapping or orthogonality constraint on clusters: In a valid hard clustering $C$ of $V$, each node $i \in V$ is in exactly one cluster $c_i \in C$.
For symmetric NMF we may formulate these constraints as
\begin{align*}
  &\sum_{i \in V} h_{ci} h_{di} = 0 &&\text{ for all } c,d \in C, c\neq d, and \\
  &\sum_{c \in C} h_{ci} = 1 &&\text{ for all } i\in V.
\end{align*}

\Citet{Traag2011ResolutionLimitScope} introduced a locality property of clustering quality functions, and called the functions that satisfy this property \emph{resolution-limit-free}. 
Their definition is as follows.

\begin{definition}[Resolution-limit-free]
 \label{def:resolution-limit-free}
 Let $C$  be a $\Obj$-optimal clustering of a graph $G_1$. Then the quality function $\Obj$ is
called \emph{resolution-limit-free} if for each subgraph $G_2$ induced by $D \subset C$, the partition $D$ is a $\Obj$-optimal clustering of $G_2$.
\end{definition}

Thus in the setting of hard clustering, a quality function is resolution-limit-free if any subset of clusters from an optimal clustering is also an optimal clustering on the graph that contains only the nodes and edges in those clusters.

NMF has been extended with a post-processing step to yield a hard clustering. This is done by assigning each node to the cluster with the largest membership coefficient.

We can now ask if NMF with this post-processing is resolution-limit-free. In \figref{fig:counterexample-res-lim-free} we give a counterexample that answers this question negatively for the NMF based methods of \citet{Psorakis2011NMF} and \citet{Ding05onthe}.


This counterexample consists of two cliques and one almost-clique.
Additionally there is a node with unclear membership. When the entire graph is considered, its membership of one cluster is slightly higher, when one clique and its incident edges are removed, its membership of another cluster is slightly higher.
This difference is very small. For example, with \citeauthor{Ding05onthe}'s method
in the optimal clustering of the large graph, the disputed node belongs to the second and third cluster with membership coefficients $0.2306$ and $0.2311$ respectively; while in the smaller subgraph the membership coefficients are $0.2284$ and $0.2607$.

\newcommand{\bracket}[4][]{
  \draw[very thick,#1] (#2,#4+0.2) -- (#2,#4) -- (#3,#4) -- (#3,#4+0.2);
}
\begin{figure}
  \centering
  \def\circlescale{0.82}
  \newcommand{\circleclique}[5]{
    \begin{scope}[#3]
      \foreach \x in {1,...,#2} {
        \node (#1\x) [#4] at (\x*360/#2:1.2*\circlescale) {};
        \foreach \y in {1,...,#2} {
          \ifnum \x=\y \breakforeach \fi
          \ifnumequal{\x}{#5}{}{
            \draw[edge-#4] (#1\x) -- (#1\y);
          }
        }
      }
    \end{scope}
  }
  \begin{tikzpicture}
    [thick
    ,node/.style={circle,draw,inner sep=0,minimum size=3mm}
    ,ynode/.style={node,fill=black!40}
    ,edge-ynode/.style={}
    ,edge-node/.style={draw=black!80}
    ,scale=0.8
    ]
    \circleclique{a}{5}{shift={(0,0)},rotate=36*0/4}{node}{0}
    \circleclique{b}{5}{shift={(4*\circlescale,0)},rotate=36*4/4}{ynode}{2}
    \node (d) [ynode] at (6.75*\circlescale,0) {};
    \circleclique{c}{5}{shift={(9.5*\circlescale,0)},rotate=36*4/4}{ynode}{0}
    \draw[edge-node] (a5) -- (b2);
    \draw[edge-node] (a5) -- (b3);
    \draw[edge-node] (a5) -- (b1);
    \draw (b5) -- (d);
    \draw (c2) -- (d);
    \bracket[blue]{0*\circlescale-1.5*\circlescale}{0*\circlescale+1.7*\circlescale}{-1.6+0.35}
    \bracket[blue]{4*\circlescale-1.7*\circlescale}{4*\circlescale+1.5*\circlescale}{-1.6+0.35}
    \bracket[blue]{6.3*\circlescale}{9.5*\circlescale+1.5*\circlescale}{-1.6+0.35}
    \bracket[densely dashed,red]{4  *\circlescale-1.7*\circlescale}{7.25*\circlescale}{-2+0.35}
    \bracket[densely dashed,red]{9.5*\circlescale-1.7*\circlescale}{9.5*\circlescale+1.5*\circlescale}{-2+0.35}
  \end{tikzpicture}
  \caption{
    (Color online)
    A counterexample that shows that NMF quality functions are not resolution limit free.
    When considering the entire graph, the first (blue) clustering is optimal.
    When considering only the gray nodes, the second (dashed, red) clustering is optimal.
    The membership of the middle node is very unclear, it belongs to two clusters to almost the same degree. When another part of a cluster changes this can tip the balance one way or the other.
    %
  }
\label{fig:counterexample-res-lim-free}
\end{figure}


\Citet{Traag2011ResolutionLimitScope} showed that the Constant Potts model (CPM) is the simplest formulation
of any (non-trivial) resolution-limit-free method. The CPM quality function $\qcpm(G,C)$ can be formulated as


\begin{equation*}
  \qcpm(G,C) = \sum_{i,j \in V}(a_{ij} - \gamma)\indicator[c_i=c_j],
\end{equation*}
where $\indicator[c_i=c_j]$ is $1$ if nodes $i$ and $j$ belong to the same cluster, and $0$ otherwise.


Symmetric NMF and CPM are closely related. This can be shown with a technique similar to that used by \citet{Ding05onthe} to link symmetric NMF and spectral clustering.

\begin{theorem}
  Symmetric NMF is an instance of CPM with $\gamma = 1/2$ and  orthogonality constraints relaxed.
\end{theorem}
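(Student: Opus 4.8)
The plan is to impose the hardness constraints on the symmetric NMF objective, show that the constrained objective coincides with $\qcpm$ at $\gamma = 1/2$ up to a clustering-independent constant, and then read the equivalence backwards: ordinary symmetric NMF is what remains once the orthogonality (and normalization) constraints are dropped from this hard version of CPM.

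First I would determine what the orthogonality constraint $\sum_{i} h_{ci}h_{di} = 0$ (for $c \neq d$) and the normalization $\sum_{c} h_{ci} = 1$ jointly enforce. Because all memberships are non-negative, the orthogonality sum can vanish only if every summand does, so at most one cluster has positive membership at each node $i$; normalization then fixes that unique membership to $1$. Hence $h_{ci} = \indicator[c = c_i]$, where $c_i$ is the cluster containing $i$, and the NMF prediction collapses to $\hat{a}_{ij} = \sum_{c} h_{ci}h_{cj} = \indicator[c_i = c_j]$.

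The remaining work is a completion of the square, echoing the technique referenced just before the statement. Substituting $\hat{a}_{ij} = \indicator[c_i = c_j]$ into $q_\text{SymNMF}$ and expanding $(a_{ij} - \indicator[c_i=c_j])^2$ yields three contributions: the term $\sum_{i,j} a_{ij}^2$ does not depend on $C$ and is an additive constant; and, using $\indicator[c_i=c_j]^2 = \indicator[c_i=c_j]$, the cross term and the quadratic term combine into $\sum_{i,j}(a_{ij} - \tfrac12)\indicator[c_i=c_j]$, which is exactly $\qcpm(G,C)$ with $\gamma = 1/2$. Two quality functions differing only by a constant share the same optimal clusterings, so the constrained symmetric NMF and $\qcpm$ are equivalent, and relaxing the constraints recovers unconstrained symmetric NMF, as claimed.

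I expect the sole point requiring care to be the first step, namely the passage from the single aggregate orthogonality equation to genuine node-wise disjointness of cluster supports; this is where non-negativity is essential and is what legitimately turns the soft NMF memberships into a hard partition. Everything downstream is routine algebra, so the conceptual content of the theorem is concentrated in that translation.
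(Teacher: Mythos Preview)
Your proposal is correct and follows essentially the same approach as the paper: impose the hardness constraints to force $\hat a_{ij}=\indicator[c_i=c_j]$, expand the square, discard the clustering-independent $\sum a_{ij}^2$ term, and use idempotence of the indicator to recover $\qcpm$ with $\gamma=1/2$. The only cosmetic difference is that the paper keeps $\hat a_{ij}$ symbolic through the expansion and applies $\hat a_{ij}^2=\hat a_{ij}$ at the end, whereas you substitute the indicator first; your explicit remark that non-negativity is what turns the aggregate orthogonality condition into node-wise disjointness is a detail the paper leaves implicit.
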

\begin{proof}
  Recall that in symmetric NMF, $\hat{a}$ is defined as $\hat{a}_{ij} = \sum_{c \in C} h_{ci}h_{cj}$.
With orthogonality constraints, any two nodes $i$ and $j$ are either in the same cluster, in which case $\hat{a}_{ij}=1$, or they are in different clusters, in which case $\hat{a}_{ij}=0$.
So $\hat{a}_{ij} = \hat{a}_{ij}^2 = \indicator[c_i=c_j]$.

Symmetric NMF is given by the optimization problem
\begin{equation*}
  \argmax_{C} q_\text{SymNMF}(G,C) = -\frac12 \sum_{i,j \in V} (a_{ij} - \hat{a}_{ij})^2.
\end{equation*}
Expanding the square shows that this is equivalent to
\begin{equation*}
  \argmax_{C} \sum_{i,j \in V} \biggl(  a_{ij}\hat{a}_{ij} - \frac12 \hat{a}_{ij}^2 \biggr).
\end{equation*}
With orthogonality constraints this is equivalent to
\begin{equation*}
  \argmax_{C} \sum_{i,j \in V} \biggl(  a_{ij} - \frac12 \biggr)\hat{a}_{ij},
\end{equation*}
which is the CPM objective with $\gamma=1/2$.
\end{proof}



%

The CPM \emph{is} resolution-limit-free.
Therefore in order to perform hard clustering using symmetric NMF it is preferable to act on the quality function, for instance by enforcing orthogonality as done in \citep{Ding05onthe,Ding2006}, instead of assigning each node to the cluster with the highest membership coefficient.
%
%

%


\section{Resolution-limit-free functions for soft clustering}
\label{sec:soft}

The definition of resolution-limit-free hinges on the idea that, given some clusters in a graph, one can look at just the nodes that are in those clusters, without affecting other clusters.
In the hard clustering setting if a node is in some cluster then it is not in another cluster. So if we look only at nodes in clusters from a set $C$, then we automatically know that these nodes are not in clusters not in $C$.
When clusters can overlap this is no longer true.

We could still try to directly adapt \defref{def:resolution-limit-free} to the soft clustering setting, by defining what a graph induced by a subclustering is. The obvious idea is to include all nodes in the support of the subclustering.
So for a clustering $C$ of $G$, the graph $G'$ induced by $D \subseteq C$ would contain only the nodes which are in at least one cluster in $D$, that is, $V' = \support(D)$, and all edges between these nodes from the original graph.

However,  optimal clusterings might have clusters in $C\setminus D$ that overlap with clusters in $D$.
This makes this notion of resolution-limit-free too restrictive, and no NMF will satisfy it. 
But these NMF methods might still be resolution-limit-free in the sense that  optimal clusterings depend on the size of the entire graph.
So there should be a weaker notion that ensures that a quality function does not suffer from the problems associated with the resolution limit.

This makes this notion of resolution-limit-free too restrictive, since it effectively disallows any interesting uses of overlapping clusters. Consider the graph with two overlapping 5-cliques shown in \figref{fig:need-overlap}.
A reasonable clustering of this graph is the one with two overlapping clusters, $c_1$ and $c_2$, corresponding to the two cliques. In an NMF style method, the optimal membership coefficients of the two shared nodes (dark in the figure) for each of the clusters will be smaller than the membership of the other nodes to these clusters, since the edge between them is in both clusters. If the membership coefficients for all nodes were the same, then the prediction for the edge between the two nodes that are in both clusters would be twice as large as the prediction for other edges, which is not optimal. So the membership of these nodes in both clusters must be smaller than that of the other nodes.
But the subgraph induced by the left cluster $c_1$ is just a clique with 5 nodes. So the single cluster in the optimal clustering on this subgraph has equal membership for all nodes. In other words, the optimal clustering on the subgraph is not the same as $c_1$. Hence no NMF method is resolution-limit-free in this sense.

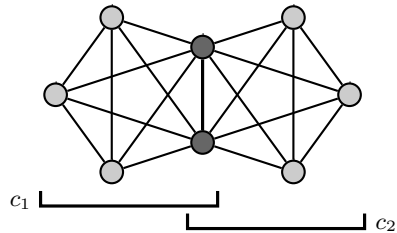
\begin{figure}
  \centering
  \def\gscale{1.5}
  \def\gbscale{0.2}
  \begin{tikzpicture}
    [thick
    ,node/.style={circle,draw,inner sep=0,minimum size=3mm}
    ,ynode/.style={node,fill=black!60}
    ,dnode/.style={node,fill=black!20}
    ,cnode/.style={circle,draw,inner sep=0,minimum size=3mm,dash pattern=on 2.5pt off 0.5pt}
    ,cedge/.style={dash pattern=on 2.5pt off 1pt}
    ,cbracket/.style={dash pattern=on 3pt off 1.2pt}
    ]
    \def\circlesize{5}
    \def\circlescale{1.2*0.9}
    \begin{scope}[]
      \foreach \x in {1,...,\circlesize} {
        \ifnum \x<3
          \node (a\x) [ynode] at (7*36+\x*360/\circlesize:\circlescale) {};
        \else
          \node (a\x) [dnode] at (7*36+\x*360/\circlesize:\circlescale) {};
        \fi
        \foreach \y in {1,...,5} {
          \ifnum \x=\y \breakforeach \fi
          \ifnum \x=2
            \draw [very thick] (a\x) -- (a\y);
          \else
            \draw (a\x) -- (a\y);
          \fi
        }
      }
      \begin{scope}[xshift=2cm*0.80902*\circlescale] 
        \foreach \x in {3,...,\circlesize} {
          \node (a\x) [dnode] at (2*36+\x*360/\circlesize:\circlescale) {};
          \foreach \y in {1,...,5} {
            \ifnum \x=\y \breakforeach \fi
            \draw (a\x) -- (a\y);
          }
        }
      \end{scope}
      \begin{scope}[yshift=-0.3cm]
        \bracket[black]{-1*\circlescale-\gbscale}{0.80902*\circlescale+\gbscale}{-1*\circlescale-.1}
        \bracket[black]{0.80902*\circlescale-\gbscale}{1*\circlescale+2*0.80902*\circlescale+\gbscale}{-1*\circlescale-.4}
    \node[left] at (-1*\circlescale-\gbscale,-1*\circlescale-.1+.05) {$c_1$};
    \node[right] at (3.05,-1*\circlescale-.4+.05) {$c_2$};
      \end{scope}
    \end{scope}
  \end{tikzpicture}
  \caption{Two cliques sharing two nodes and an edge (dark colored). The obvious clustering consists of two overlapping clusters.}
  \label{fig:need-overlap}
\end{figure}

An alternative approach is to only consider subclusterings with disjoint support in the definition of resolution-limit-free.
That is, with $\support(D) \cap \support(C\setminus D) = \emptyset$.
Unfortunately this variant has the opposite problem: the condition almost never holds. So, many quality functions would trivially satisfy this variant of resolution-limit-freeness. For example, the optimal clusterings in NMF methods based on a Poisson likelihood will always have overlapping clusters covering every edge, so the disjointness condition only holds when the graph has multiple connected components.

Clearly we need a compromise.

\subsection{Locality}
\label{sec:locality}

The resolution-limit-free property looks at the behavior of a clustering quality function on graphs of different sizes. Intuitively a quality function suffers from a resolution limit if  optimal clusterings at a small scale depend on the size of the entire graph. 

As shown in the previous paragraph we can not just zoom in to the scale of any subclustering $D$ by discarding the rest of the graph.

But if we let go of only considering the optimal clustering, it does become possible to zoom in only partially, leaving the part of the graph covered by clusters that overlap clusters in $D$ intact. If $D$ is an optimal clustering of the original graph, then it should be a `locally optimal' clustering of the smaller graph in some sense. 

We take this to mean that if a clustering $D$ is better than some other clustering $D'$ on the original graph, then the same holds on the smaller graph, as long as $D$ and $D'$ induce the same zoomed in graph.

It then makes sense to not only consider zooming in by discarding the rest of the graph, but also consider arbitrary changes to the rest of the graph, as well as arbitrary changes to clusters not overlapping with $D$ or $D'$.

More precisely, if one subclustering $D$ is better than another subclustering $D'$ on a subgraph of some graph $G_1$, and one changes the graph to $G_2$ in such a way that the changes to the graph and to the clustering are disjoint from the support of $D$ and $D'$, then $D$ will stay a better clustering than $D'$.

This idea is illustrated in \figref{fig:locality}, and formalized in \defref{def:locality}
%

\begin{definition}[Locality]
  \label{def:locality}
  A clustering quality function $q$ is \emph{local} if
  for all graphs $G_1$, $G_2$, and common subgraphs $G_S$ of $G_1$ and $G_2$,  
  for all clusterings $C_1$ of $G_1$, $C_2$ of $G_2$, and $D,D'$ of $G_S$,
  such that $\support(C_1 \symdiff C_2) \cap \support(D \cup D') = \emptyset$,
  it is the case that
  $q(G_1,C_1 \cup D) \ge q(G_1,C_1 \cup D')$ if and only if
  $q(G_2,C_2 \cup D) \ge q(G_2,C_2 \cup D')$.
\end{definition}

\begin{figure}[t]
  \centering
  \def\gscale{1.1}
  \def\gbscale{0.22}
  \begin{tikzpicture}
    [thick
    ,node/.style={circle,draw,inner sep=0,minimum size=3mm}
    ,ynode/.style={node,fill=black!60}
    ,dnode/.style={node,fill=black!20}
    ,cnode/.style={circle,draw,inner sep=0,minimum size=3mm,dash pattern=on 2.5pt off 0.5pt}
    ,cedge/.style={dash pattern=on 2.5pt off 1pt}
    ,cbracket/.style={dash pattern=on 3pt off 1.2pt}
    ]
    \begin{scope}
      \node[ynode] (c1) at (0,0) {};
      \node[ynode] (c2) at (0,-1*\gscale) {};
      \node[ynode] (c3) at (1*\gscale,0) {};
      \node[ynode] (c4) at (1*\gscale,-1*\gscale) {};
      \node[dnode] (c5) at (2*\gscale,0) {};
      \node[dnode] (c6) at (2*\gscale,-1*\gscale) {};
      \node[cnode] (v1) at (2.8*\gscale,-0.5*\gscale) {};
      \draw (c1)--(c2);
      \draw (c1)--(c3);
      \draw (c2)--(c4);
      \draw (c4)--(c3);
      \draw[cedge] (c3)--(c5);
      \draw[cedge] (c4)--(c6);
      \draw[cedge] (c5)--(c6);
      \draw[cedge] (c5)--(v1);
      \draw[cedge] (c6)--(v1);
      \begin{scope}[yshift=-0.3cm]
        \bracket[blue]{-\gbscale}{0*\gscale+\gbscale}{-1.*\gscale-.1}
        \bracket[blue]{1*\gscale-\gbscale}{1*\gscale+\gbscale}{-1*\gscale-.1}
        \bracket[black]{1*\gscale-\gbscale}{2*\gscale+\gbscale}{-1*\gscale-.4}
        \bracket[black,cbracket]{2*\gscale-\gbscale}{2.8*\gscale+\gbscale}{-1*\gscale-.1}
      \end{scope}
      \begin{scope}[yshift=-1.2cm]
        \bracket[red]{-\gbscale}{1*\gscale+\gbscale}{-1.*\gscale-.1}
        \bracket[black]{1*\gscale-\gbscale}{2*\gscale+\gbscale}{-1*\gscale-.4}
        \bracket[black,cbracket]{2*\gscale-\gbscale}{2.8*\gscale+\gbscale}{-1*\gscale-.1}
      \end{scope}
    \end{scope}
    \begin{scope}[xshift=4.6cm]
      \node[ynode] (c1) at (0*\gscale,0*\gscale) {};
      \node[ynode] (c2) at (0*\gscale,-1*\gscale) {};
      \node[ynode] (c3) at (1*\gscale,0*\gscale) {};
      \node[ynode] (c4) at (1*\gscale,-1*\gscale) {};
      \node[dnode] (c5) at (2*\gscale,0*\gscale) {};
      \node[dnode] (c6) at (2*\gscale,-1*\gscale) {};
      \node[cnode] (v1) at (3*\gscale,0*\gscale) {};
      \node[cnode] (v2) at (3*\gscale,-1*\gscale) {};
      \draw (c1)--(c2);
      \draw (c1)--(c3);
      \draw (c2)--(c4);
      \draw (c4)--(c3);
      \draw[cedge] (c3)--(c6);
      \draw[cedge] (c4)--(c6);
      \draw[cedge] (c5)--(c6); 
      \draw[cedge] (c5)--(v1);
      \draw[cedge] (c6)--(v1);
      \draw[cedge] (v2)--(v1);
      \draw[cedge] (v2)--(c6);
      \begin{scope}[yshift=-0.3cm]
        \bracket[blue]{-\gbscale}{0*\gscale+\gbscale}{-1*\gscale-.1}
        \bracket[blue]{1*\gscale-\gbscale}{1*\gscale+\gbscale}{-1*\gscale-.1}
        \bracket[black]{1*\gscale-\gbscale}{2*\gscale+\gbscale}{-1*\gscale-.4}
        \bracket[black,cbracket]{2*\gscale-\gbscale}{3*\gscale+\gbscale}{-1*\gscale-.1}
        \bracket[black,cbracket]{3*\gscale-\gbscale}{3*\gscale+\gbscale}{-1*\gscale-.4}
      \end{scope}
      \begin{scope}[yshift=-1.2cm]
        \bracket[red]{-\gbscale}{1*\gscale+\gbscale}{-1*\gscale-.1}
        \bracket[black]{1*\gscale-\gbscale}{2*\gscale+\gbscale}{-1*\gscale-.4}
        \bracket[black]{1*\gscale-\gbscale}{2*\gscale+\gbscale}{-1*\gscale-.4}
        \bracket[black,cbracket]{2*\gscale-\gbscale}{3*\gscale+\gbscale}{-1*\gscale-.1}
        \bracket[black,cbracket]{3*\gscale-\gbscale}{3*\gscale+\gbscale}{-1*\gscale-.4}
      \end{scope}
    \end{scope}
  \end{tikzpicture}
  \caption{
    (Color online)
    An example illustrating locality.
    Between the left and right side, the dashed part of the clustering and the dashed part of the graph changes.
    The top and bottom clusterings differ only on the constant part (red/blue), and these differences don't overlap with changing clusters (dashed).
    Therefore if the top clustering has a higher quality than the bottom clustering on the left graph,
     then the same must hold on the right graph.
    Formally, the dark gray nodes are in the common subgraph $G_S$, the light gray nodes are in $\support(C_2 \cap C_1)$. The blue clustering is $D$, the red clustering $D'$, the solid black clusters is in both $C_1$ and $C_2$ and the dashed clusters are in only one of $C_1$ and $C_2$.
  }
  \label{fig:locality}
\end{figure}
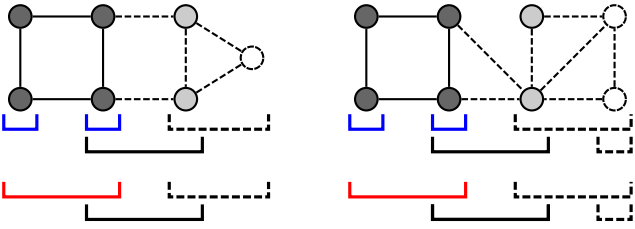

Locality as defined in \citet{AckermanBenDavidLokerCOLT2010} differs from our definition because it is a property of clustering functions.  The locality property considered in \citet{vanLaarhoven2014axioms} differs from our definition because it also enforces `agreement on the neighborhood' of the common subgraph. 
They also briefly discussed and dismissed a ``strong locality'' property, which is closer to our definition.



Even in the case of hard clustering locality and resolution-limit-free are not equivalent.
For hard clustering, locality implies resolution-limit-freeness, but the converse is not true.


\begin{theorem}
  If a hard clustering quality function is local, then it is resolution-limit-free.
\end{theorem}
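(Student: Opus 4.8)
The plan is to instantiate \defref{def:locality} so that the ``second graph'' is exactly the induced subgraph on which optimality must be shown, and then let $q$-optimality on the large graph propagate to the small one through the biconditional. First I would fix the objects supplied by \defref{def:resolution-limit-free}: let $C$ be a $q$-optimal clustering of $G_1$, fix $D \subset C$, and let $G_S$ denote the subgraph induced by $\support(D)$ (whose node set is exactly $\support(D)$ in the hard setting). The goal is to show $D$ is $q$-optimal on $G_S$, so I take an arbitrary clustering $D'$ of $G_S$ and aim to prove $q(G_S, D) \ge q(G_S, D')$.

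The key step is the choice of instance for locality. I would apply \defref{def:locality} with the first graph equal to $G_1$, the second graph equal to $G_S$, and the common subgraph also equal to $G_S$ (it is a subgraph of $G_1$ by construction and trivially a subgraph of itself). For the clusterings I set $C_1 = C \setminus D$, $C_2 = \emptyset$, and keep $D, D'$. Then $C_1 \cup D = C$, $C_1 \cup D' = (C\setminus D)\cup D'$, while $C_2 \cup D = D$ and $C_2 \cup D' = D'$, so the biconditional will directly relate the two graphs I care about.

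Before invoking locality I must verify its side condition $\support(C_1 \symdiff C_2) \cap \support(D\cup D') = \emptyset$. Here $C_1 \symdiff C_2 = C\setminus D$, and since $D'$ is a clustering of $G_S$ its support lies in the node set $\support(D)$, giving $\support(D\cup D') = \support(D)$. Thus the condition collapses to $\support(C\setminus D) \cap \support(D) = \emptyset$. This is exactly where the hypothesis of \emph{hard} clustering is used: $C$ is a partition, so distinct clusters have disjoint support, and the intersection is empty. This is the only nontrivial point in the argument, and it is precisely the step that would break for soft (overlapping) clusterings.

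With the instance validated, locality yields that $q(G_1, C) \ge q(G_1, (C\setminus D)\cup D')$ if and only if $q(G_S, D) \ge q(G_S, D')$. Since $(C\setminus D)\cup D'$ is a clustering of $G_1$ and $C$ is $q$-optimal on $G_1$, the left-hand inequality holds, hence so does the right-hand one; as $D'$ was an arbitrary clustering of $G_S$, this shows $D$ is $q$-optimal on $G_S$, which is the claim. The main obstacle is conceptual rather than computational: recognizing that locality should be applied with the empty clustering $C_2 = \emptyset$ on the induced-subgraph side, so that the partition's disjointness discharges the support condition. Once that instantiation is chosen, the remainder is routine bookkeeping.
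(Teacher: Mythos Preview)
Your proposal is correct and follows essentially the same approach as the paper's proof: instantiate locality with $G_2=G_S$ the induced subgraph, $C_1=C\setminus D$, $C_2=\emptyset$, and use the partition property of hard clusterings to discharge the disjoint-support side condition. Your write-up is in fact a bit more explicit than the paper's in justifying $\support(D\cup D')\subseteq \support(D)$ and in spelling out how the biconditional transfers optimality.
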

\begin{proof}
  Let $\Obj$ be a local hard cluster quality function,
 and $C$ be a $\Obj$-optimal clustering of a graph $G_1=(V_1,A_1)$.
Consider the subgraph $G_2$ induced by $D \subset C$.



Let $C_1 = C\setminus D$ and $C_2 = \emptyset$, so $C_1 \symdiff C_2 = C_1$.
Because $C$ is a partition of $V_1$, we have for every clustering $D'$ of $G_2$ that $\support(C_1 \symdiff C_2) \cap \support(D \cup D') = \emptyset$.

Then for each clustering $D'$ of $G_2$ we have
 $q(G_1,C_1 \cup D) \ge q(G_1,C_1 \cup D')$ because $C_1 \cup D = C$ is an optimal clustering of $G_1$.
By locality it follows that
 $q(G_2,C_2 \cup D) \ge q(G_2,C_2 \cup D')$.

So $D$ is a $\Obj$-optimal clustering of $G_2$.
\end{proof}

\begin{theorem}
  If a hard clustering quality function is resolution-limit-free then it is not necessarily local.
  \label{thm:res-lim-free-not-local}
\end{theorem}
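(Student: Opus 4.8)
The plan is to prove this by exhibiting an explicit hard clustering quality function $q$ that is resolution-limit-free yet violates locality. The guiding observation is that \defref{def:resolution-limit-free} constrains only the \emph{optimal} clusterings and how they restrict to induced subgraphs, whereas \defref{def:locality} constrains the \emph{entire} pairwise ordering of clusterings across graphs. So I would look for a function whose optima are globally consistent (securing resolution-limit-freeness) but whose ordering among \emph{sub-optimal} clusterings depends on the surrounding context (breaking locality).

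Concretely, I would force the all-singletons clustering to be the unique optimum on every graph, assigning it quality $1$ and giving every other clustering a quality strictly below $1$. Because the restriction of an all-singletons clustering to any induced subgraph is again the all-singletons clustering of that subgraph, this choice makes $q$ resolution-limit-free almost for free: the unique optimum of any $G_1$ is the all-singletons clustering, and for any $D \subset C$ the unique optimum of the induced subgraph on $\support(D)$ is exactly $D$.

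To break locality I would then exploit the remaining freedom in ranking sub-optimal clusterings. I would fix a single graph $G$ (taking $G_1 = G_2 = G$ is permitted by \defref{def:locality}) containing a triangle on nodes $\{1,2,3\}$ together with two further nodes $\{4,5\}$, and let $G_S$ be the induced triangle. Put $D = \{\{1,2,3\}\}$ and $D' = \{\{1,2\},\{3\}\}$, and choose the two contexts $C_1 = \{\{4\},\{5\}\}$ and $C_2 = \{\{4,5\}\}$, so that $\support(C_1 \symdiff C_2) = \{4,5\}$ is disjoint from $\support(D \cup D') = \{1,2,3\}$, satisfying the hypothesis of locality. The four clusterings $C_1 \cup D$, $C_1 \cup D'$, $C_2 \cup D$, $C_2 \cup D'$ are four distinct partitions, none of them all-singletons, so I am free to set $q(G,C_1 \cup D) > q(G,C_1 \cup D')$ but $q(G,C_2 \cup D) < q(G,C_2 \cup D')$ (for instance the values $0.6,0.4,0.4,0.6$), with all remaining clusterings receiving quality $0$. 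This makes the left-hand inequality in \defref{def:locality} hold while the right-hand one fails, so the ``if and only if'' is violated.

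The routine part is to verify the two defining properties: that all-singletons remains the unique maximizer on every graph (since every special value lies below $1$) and restricts correctly, giving resolution-limit-freeness. The step I expect to require the most care — and the main obstacle — is making sure the construction does not inadvertently \emph{satisfy} locality. I must confirm that the four ``flip'' clusterings are genuinely sub-optimal, so that tampering with their values cannot perturb the optima used for resolution-limit-freeness, and that they are pairwise distinct as (graph, clustering) pairs, so that assigning them inconsistent values yields a well-defined function. Once this bookkeeping is in place, the witnessed failure of locality together with the verified resolution-limit-freeness completes the proof.
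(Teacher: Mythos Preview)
Your proposal is correct and establishes the theorem by a genuinely different construction from the paper's. The paper uses the closed-form quality function $q(G,C) = \max_{c\in C}|c| + \min_{c\in C}|c|$, whose unique optimum on any graph is the single-cluster partition $\{V\}$; resolution-limit-freeness then holds vacuously because $\{V\}$ has no non-trivial proper subset of clusters to restrict to. Non-locality is witnessed across two \emph{different} graphs $G_1$ (seven nodes) and $G_2$ (six nodes). By contrast, you build $q$ by fiat: the all-singletons partition is declared the unique optimum everywhere, and you hand-pick values for four sub-optimal partitions on one fixed graph so that their relative order flips when the context changes from $C_1$ to $C_2$. Your route makes the logical gap between the two notions very transparent---\defref{def:resolution-limit-free} constrains only optima, whereas \defref{def:locality} constrains the full pairwise order---and it shows that locality can fail even with $G_1 = G_2$, i.e.\ without changing the graph at all. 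The paper's route has the compensating virtue of exhibiting a simple closed formula rather than an ad~hoc table of values, and its vacuous verification of resolution-limit-freeness sidesteps the restriction bookkeeping you identify as the main point of care.
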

\begin{proof}
  
  Consider the following quality function
  \begin{equation*}
    q(G,C) = \max_{c \in C} |c| + \min_{c \in C} |c|
  \end{equation*}
  For each graph $G=(V,A)$, the clustering $C=\{V\}$ is the single $q$-optimal clustering, with quality $2|V|$.
  Since there are no strict subsets of $C$ the quality function is trivially resolution-limit-free.
  
  Now consider the graphs $G_1$ with nodes $\{1,2,\dotsc,7\}$ and $G_2$ with nodes $\{1,2,\dotsc,6\}$, both with no edges.
  These graphs agree on the set $X=\{1,2,\dotsc,6\}$. Take the clusterings $E=\emptyset$, $D=\{\{1,2,3,4\},\{5\},\{6\}\}$, $D' = \{\{1,2,3\},\{5,6,7\}\}$, $C_1=\{\{7\}\}$ and $C_2=\{\}$. Then $q(G_1,C_1\cup D)=5>4=q(G_1,C_1\cup D')$, while    $q(G_2,C_2\cup D)=5<6=q(G_2,C_2\cup D')$.

  So $q$ is not local.
  
  This counterexample is illustrated in Figure~\ref{fig:counterexample-res-lim-free-local}.
\begin{figure}[h]
  \begin{center}
  \begin{tikzpicture}
    \def\scale{0.7}
    \def\yscale{0.6}
    \foreach \x in {1,2,...,7} {
      \node[node] at (\x*\scale,0) {\x};
    }
    \renewcommand{\bracket}[4][]{
      \draw[very thick,#1] (#2,#4+0.1) -- (#2,#4-0.1) -- (#3,#4-0.1) -- (#3,#4+0.1);
    }
    \bracket{0.6*\scale}{4.4*\scale}{-\yscale*1}
    \bracket{4.6*\scale}{5.4*\scale}{-\yscale*1}
    \bracket{5.6*\scale}{6.4*\scale}{-\yscale*1}
    \bracket{6.6*\scale}{7.4*\scale}{-\yscale*1}
    \bracket{0.6*\scale}{3.4*\scale}{-\yscale*2}
    \bracket{3.6*\scale}{6.4*\scale}{-\yscale*2}
    \bracket{6.6*\scale}{7.4*\scale}{-\yscale*2}
    \bracket{0.6*\scale}{4.4*\scale}{-\yscale*3}
    \bracket{4.6*\scale}{5.4*\scale}{-\yscale*3}
    \bracket{5.6*\scale}{6.4*\scale}{-\yscale*3}
    \bracket{0.6*\scale}{3.4*\scale}{-\yscale*4}
    \bracket{3.6*\scale}{6.4*\scale}{-\yscale*4}
    
    \node[right] at (8*\scale,-\yscale*1) {$q(G_1,C_1\cup D)=5$};
    \node[right] at (8*\scale,-\yscale*2) {$q(G_1,C_1\cup D')=4$};
    \node[right] at (8*\scale,-\yscale*3) {$q(G_2,C_2\cup D)=5$};
    \node[right] at (8*\scale,-\yscale*4) {$q(G_2,C_2\cup D')=6$};
  \end{tikzpicture}
  \end{center}
  \caption{The counterexample from the proof of Theorem~\ref{thm:res-lim-free-not-local}.}
  \label{fig:counterexample-res-lim-free-local}
\end{figure}
\end{proof}

\subsection{Characterizing local quality functions}

Many quality functions can be written as a sum with a term for each edge, characterizing a goodness of fit, a term for each node, controlling the amount of overlap, and a term for each cluster, indicating some kind of complexity penalty. There might also be a constant term not actually depending on the clustering, and so not affecting the optimum. We call such quality functions additive.

\begin{definition}
  \label{def:additive}
  A qualify function is \emph{additive} if it can be written as
  \begin{align*}
    q(G,C) &= q_\text{graph}(G)
           + \sum_{c \in C} q_\text{clus}(c)
           \\&+ \sum_{i \in V} q_\text{node}\bigl(\{c \in C \mid i \in \support(c)\} \bigr)
           \\&+ \sum_{i \in V}\sum_{j \in V} q_\text{edge}\bigl( a_{ij}, \{c \in C \mid i,j \in \support(c)\} \bigr)
           \label{eq:ss}
  \end{align*}
  for some functions $q_\text{graph}$, $q_\text{clus}$, $q_\text{node}$, $q_\text{edge}$.
\end{definition}

Note that $q_\text{node}$ can depend on all clusters that contain node $i$, and $q_\text{edge}$ can depend on all clusters that contain the edge $ij$.


\begin{theorem}
  If a quality function is additive, then it is local.
  \label{thm:additive-local}
\end{theorem}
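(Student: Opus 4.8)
The plan is to reduce locality to a single equality of ``score gaps.'' Set
$$\Delta_1 = q(G_1, C_1 \cup D) - q(G_1, C_1 \cup D'), \qquad \Delta_2 = q(G_2, C_2 \cup D) - q(G_2, C_2 \cup D').$$
Since $q(G_1,C_1\cup D) \ge q(G_1,C_1\cup D')$ is equivalent to $\Delta_1 \ge 0$, and likewise for $\Delta_2$, proving $\Delta_1 = \Delta_2$ immediately yields the biconditional in \defref{def:locality}. So the entire argument becomes a bookkeeping computation: expand $q$ via \defref{def:additive} and show that, in each of the four summands, every contribution that could differ between $\Delta_1$ and $\Delta_2$ either cancels inside a gap or coincides across the two gaps.

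First I would dispatch the easy terms. The term $q_\text{graph}(G)$ does not depend on the clustering, so it cancels within each gap. In the node sum, a node $i \notin \support(D\cup D')$ is contained in exactly the same clusters under $C_1\cup D$ and under $C_1\cup D'$ (only the clusters of $C_1$ containing it), so its $q_\text{node}$ contribution cancels inside $\Delta_1$; hence only nodes $i \in \support(D\cup D')$ survive. The same observation applied to pairs shows only edges $(i,j)$ with both endpoints in $\support(D\cup D')$ survive in the edge sum; crucially, such endpoints lie in $G_S$, so the weight $a_{ij}$ fed to $q_\text{edge}$ is common to $G_1$ and $G_2$. For the cluster term, the clusters of $C_1$ cancel within $\Delta_1$ (and those of $C_2$ within $\Delta_2$), leaving $\sum_{c\in D} q_\text{clus}(c) - \sum_{c\in D'} q_\text{clus}(c)$ on both sides.

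The crux — and the only real content — is the key lemma that makes the surviving node and edge contributions identical across the two graphs. For any surviving index $i \in \support(D\cup D')$ and any cluster $c$ with $i \in \support(c)$, the hypothesis $\support(C_1 \symdiff C_2)\cap\support(D\cup D')=\emptyset$ forces $c \in C_1 \iff c \in C_2$: if $c$ lay in exactly one of $C_1,C_2$, then $\support(c)\subseteq\support(C_1\symdiff C_2)$, yet $i \in \support(c)\cap\support(D\cup D')$, contradicting the disjointness. Thus $K_i := \{c\in C_1 \mid i\in\support(c)\} = \{c\in C_2\mid i\in\support(c)\}$, a subset of $C_1\cap C_2$, and the surviving node contribution is $q_\text{node}(K_i \cup D_i) - q_\text{node}(K_i \cup D_i')$, where $D_i,D_i'$ are the clusters of $D,D'$ containing $i$ — the same expression whether computed in $\Delta_1$ or $\Delta_2$. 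The identical reduction applies edge-by-edge, and the same disjointness argument rules out a cluster shared between $D$ and exactly one of $C_1,C_2$ (up to clusters of empty support, which I would note contribute equally and so cancel). Summing the four matched terms gives $\Delta_1=\Delta_2$. I expect the main obstacle to be purely expository: tracking that the node and edge sums range over all of $V_1$ (resp.\ $V_2$) and checking that the non-surviving indices contribute $0$ to each gap, so that only the common data $C_1\cap C_2$, $D$, $D'$, and the edge weights within $G_S$ remain.
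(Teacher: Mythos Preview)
Your proposal is correct and follows essentially the same approach as the paper: both arguments show $\Delta_1=\Delta_2$ by expanding the additive form and checking, term by term, that the surviving contributions depend only on $D$, $D'$, and data common to both sides. The paper organizes the node/edge analysis by introducing $E=C_1\cap C_2$ and splitting on whether $i\in\support(C_1\setminus C_2)$, whereas you split on whether $i\in\support(D\cup D')$ and derive $K_i\subseteq C_1\cap C_2$; these are two phrasings of the same observation, and your version is arguably a bit more explicit in noting that the surviving edge weights $a_{ij}$ come from $G_S$ and in flagging the empty-support edge case for the cluster term.
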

\begin{proof}

  Let $q$ be an additive quality function.
  Let $G_1$ $G_2$ and $G_S=(V,A)$ be graphs such that $G_S$ is a subgraph of both $G_1$ and $G_2$.

  Let $C_1$ be a clustering of $G_1$,
       $C_2$ a clustering of $G_2$ and,
       $D,D'$ clusterings of $G_S$
  such that $\support(C_1 \symdiff C_2) \cap \support(D \cup D') = \emptyset$.
  
  Let $E = C_1 \cap C_2$.
  Then for every node $i \in \support(C_1 \setminus C_2)$,
     we have
     $i \notin \support(D)$, so $\{ c \in C_1 \cup D \mid i \in \support(c) \} = \{ c \in C_1 \cup D' \mid i \in \support(c) \}  = \{ c \in C_1 \mid i \in \support(c) \}$.

  Conversely, for every node $i \notin \support(C_1 \setminus C_2)$,
     we have
     $\{ c \in C_1 \cup D \mid i \in \support(c) \} = \{ c \in E \cup D \mid i \in \support(c) \}$.

  Therefore
  \begin{align*}
    q(G_1&,C_1\cup D)-q(G_1,C_1\cup D') \\=&
    \sum_{c\in D} q_\text{clus}(c) - \sum_{c\in D'} q_\text{clus}(c)
             \\+& \sum_{i \in V} q_\text{node}\bigl(\{c \in E \cup D \mid i \in \support(c)\} \bigr)
             \\-& \sum_{i \in V} q_\text{node}\bigl(\{c \in E \cup D' \mid i \in \support(c)\} \bigr)
             \\+& \sum_{i,j \in V} q_\text{edge}\bigl( a_{ij}, \{c \in E \cup D \mid i,j \in \support(c)\} \bigr)
             \\-& \sum_{i,j \in V} q_\text{edge}\bigl( a_{ij}, \{c \in E \cup D' \mid i,j \in \support(c)\} \bigr)
             ,
  \end{align*}
  and similarly for $G_2$ and $C_2$ in place of the $G_1$ and $C_1$.
  
  Which implies that $q(G_1,C_1\cup D)-q(G_1,C_1\cup D') = q(G_2,C_2\cup D)-q(G_2,C_2\cup D')$.

  And so $q(G_1,C_1\cup D) \ge q(G_1,C_1 \cup D')$ if and only if $q(G_2,C_2\cup D) \ge q(G_2,C_2 \cup D')$.

  In other words, $q$ is local.
  
\end{proof}
The converse of \thmref{thm:additive-local} does not hold; not all local quality functions are additive. For example, any monotonic function of a local quality function is also local.

Another example are quality functions that use higher order interactions, that is, it includes terms not only for nodes and edges, but also for triangles and larger structures. For instance, the clique percolation method \citep{Palla2005-clique-percolation} finds clusters which are cliques. That method is local, but it is not additive.
We could imagine including higher-order terms in the definition of additivity,
\begin{multline*}
q(G,C) = \dotsb +  \sum_{i,j,k \in V} q_\text{triangle}(a_{ij},a_{ik},a_{jk},
                  \\  \{ c \in C \mid i,j,k \in \support(c) \} ),
\end{multline*}
and so on. But for most purposes the edge term is sufficient; and the local quality functions that we consider in this paper are all additive in the sense of \defref{def:additive}.

Additivity provides additional insight into how quality functions behave: the quality is composed of the goodness-of-fit of a the clustering to nodes and edges (and perhaps larger structures), together with a cost term for each cluster.
By \thmref{thm:additive-local}, it also gives us a convenient way to \emph{prove} that a certain quality function is local, while locality can more convenient if we want to \emph{reason} about the behavior of a quality function.

For symmetric NMF, $\hat{a}_{ij}$ can be written as a sum over clusters that contain nodes $i$ and $j$,
\begin{equation*}
  \hat{a}_{ij} = \sum_{c \in C\text{ s.t. }i,j \in \support(c)} h_{ci} h_{cj}.
\end{equation*}
As a consequence, NMF quality functions without regularization, such as $q_\text{SymNMF}$, are additive.
Therefore these quality functions are local.



Many regularization terms can also be encoded in an additive quality function.
For example the L2 term $\sum_{c \in C}\sum_{i \in V}h_{ci}^2$ is a sum over clusters and independent of the graph, and so it fits in $q_\text{clus}$.

\subsection{Fixed number of clusters}
\label{sec:fixed}

The question of automatically finding the right number of clusters is still not fully solved.
Therefore in most NMF based clustering methods the number of clusters $k$ is specified by the user.

For most quality functions, if they are optimized directly without taking this restriction into account, then the number of clusters will tend to infinite. So we somehow need to fix the number of clusters.

The most direct way to incorporate this restriction of a fixed number of clusters is by adding it as a constraint to the quality function. That is, use $q(G,C,k) = q(G,C) + \indicator[|C| = k]\infty$.
Strictly speaking this is not a function to the real numbers. But we never need the fact that $q$ is such a function, all we need is that the quality of different clusterings can be compared.
Unfortunately, encoding a fixed $k$ restriction in the quality function violates locality.

Take two clusterings $C$ and $D$ of a graph $G$, with a different number of clusters.
Let $C'$, $D'$ and $G'$ be copies of $C$, $D$ and $G$ on a disjoint set of nodes, and let $k$ be $|C|+|D|$.
Then the quality $q(G\cup G',D\cup C',k)$ is finite, while $q(G \cup G', D \cup D',k)$ is infinite.
On the other hand, $q(G\cup G',C\cup C',k)$ is infinite, while $q(G\cup G',C\cup D',k)$ is finite.
This contradicts locality.

Instead, we need to consider the restriction on the number of clusters as separate from the quality function.
In that case the definition of locality can be used unchanged.

Equivalently, if we call a clustering consisting of $k$ clusters a $k$-clustering, then we can extend the definitions of locality to take the restricted number of clusters into account.
This approach is also used by \citet{Ackerman2013}.

If we call a function $q(G,C,k)$ for graphs $G$, clusterings $C$ and number of clusters $k$ a fixed-size quality function, then this leads to the following fixed-size variant of locality.

\begin{definition}[Fixed size locality]
  A fixed-size quality function $q$ is \emph{fixed-size local} if
  for all graphs $G_1$, $G_2$ and a common subgraph $G_S$, 
   for all $k_1$-clusterings $C_1$ of $G_1$, $k_2$-clusterings $C_2$ of $G_2$, and $m$-clustering $D$ of $G_S$ and $m'$-clusterings $D'$ of $G_S$,
  such that $\support(C_1 \symdiff C_2) \cap \support(D \cup D') = \emptyset$
  it is the case that
  $q(G_1,C_1 \cup D,k_1+m) \ge q(G_1,C_1 \cup D',k_1+m')$ if and only if
  $q(G_2,C_2 \cup D,k_2+m) \ge q(G_2,C_2 \cup D',k_2+m')$.
\end{definition}

Every local quality function that does not depend on $k$ is fixed-size local 
when combined with a constraint that the number of clusters must be $k$.
And so NMF with a fixed number of clusters is fixed-size local.

\subsection{Varying number of clusters}
\label{sec:varying}

\citet{Psorakis2011NMF} formulated a Bayesian formulation of NMF for overlapping community detection that uses automatic relevance determination (ARD) \citep{Tan09} to determine the number of clusters. Their quality functions can be written as
\begin{align*}
 q_\text{BayNMF} &= -\sum_{i \in V}\sum_{j \in V}\biggl( v_{ij} \log \frac{v_{ij}}{\hat{v}_{ij}}  + \hat{v}_{ij} \biggr)
                 \\& - \frac12 \sum_{c \in C} \biggl( \sum_{i \in V} \beta_c w_{ci}^2 + \sum_{i \in V} \beta_c h_{ci}^2 - 2 |V| \log \beta_c \biggr)
                 \\&- \sum_{c \in C} \bigl( \beta_c b - (a - 1)\log \beta_c \bigr)
                   - \kappa,
\end{align*}
where each cluster is a triple $c=(w_c,h_c,\beta_c)$, and $\kappa$ is a constant.
ARD works by fixing the number of clusters to some upper bound. In the optimal clustering many of these clusters $c$ will be empty, that is, have $\support(c) = \emptyset$.


This quality function is \emph{not} additive, for two reasons.
First of all, there is the term $2|V| \log \beta_c$ for each cluster, which stems from the half-normal priors on $W$ and $H$. This term depends on the number of nodes.
Secondly, the $\kappa$ term actually depends on the number of clusters and the number of nodes, since it contains the normalizing constants for the hyperprior on $\beta$, as well as constant factors for the half-normal priors.
For a fixed graph and fixed number of clusters the $\kappa$ term can be ignored, however.

As a result, \citeauthor{Psorakis2011NMF}'s method is also not local, as the following counterexample shows:

\begin{theorem}
  $q_\text{BayNMF}$ is not local.
\end{theorem}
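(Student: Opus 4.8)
The plan is to produce an explicit counterexample in the spirit of the proof of \thmref{thm:res-lim-free-not-local}, exploiting the observation already recorded above: the cluster penalty in $q_\text{BayNMF}$ contains, for each cluster $c$, the contribution $-\frac12(-2|V|\log\beta_c) = |V|\log\beta_c$, which grows linearly in the \emph{total} number of nodes $|V|$. Locality forbids precisely this kind of global dependence, so the strategy is to hold everything about the clusters under comparison fixed while changing only $|V|$, and show that the resulting linear drift flips a quality comparison between two graphs sharing a common subgraph.

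Concretely, I would first take $G_S$ to be a trivial graph — a single node suffices — and let $G_1$ and $G_2$ each consist of $G_S$ together with differing numbers of additional isolated nodes, so that $G_S$ is a common subgraph while $|V_1| \neq |V_2|$. I would then set $C_1 = C_2 = \emptyset$, so that $C_1 \symdiff C_2 = \emptyset$ and the disjointness hypothesis $\support(C_1 \symdiff C_2) \cap \support(D \cup D') = \emptyset$ of \defref{def:locality} holds vacuously; this removes all bookkeeping about overlapping supports.

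Next I would choose $D$ and $D'$ to each be a \emph{single} cluster supported on $G_S$, with identical membership functions $w$ and $h$ but two different relevance parameters $\beta_D \neq \beta_{D'}$. Keeping $w$ and $h$ equal makes the predictions $\hat{v}_{ij}$ identical for $D$ and $D'$, so the entire Poisson/Kullback--Leibler edge sum cancels in the difference $q_\text{BayNMF}(G, C\cup D) - q_\text{BayNMF}(G, C\cup D')$; because $D$ and $D'$ have the same number of clusters the $\kappa$ term cancels as well, and the isolated extra nodes, being covered by no cluster and carrying no edges, contribute nothing under the convention $0\log 0 = 0$. The cluster penalty and the $\beta$-hyperprior do \emph{not} cancel, since $\beta_D \neq \beta_{D'}$, but their only $|V|$-dependent part is the $|V|\log\beta_c$ term. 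What survives is therefore
\[
  q_\text{BayNMF}(G, C\cup D) - q_\text{BayNMF}(G, C\cup D') = |V|\,(\log\beta_D - \log\beta_{D'}) + R,
\]
where $R$ collects the remaining $\beta$-dependent contributions and is independent of $|V|$ (hence the same for $G_1$ and $G_2$).

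Finally, since $\log\beta_D - \log\beta_{D'} \neq 0$, this difference is an affine, strictly monotone function of $|V|$ with a single zero at $|V|^\ast = -R/(\log\beta_D - \log\beta_{D'})$. By tuning $\beta_D, \beta_{D'}$ and the hyperparameters $a,b$ (which only shift $R$), I can place $|V|^\ast$ strictly between the two attainable node counts, so the difference is negative on $G_1$ and positive on $G_2$. Then $q_\text{BayNMF}(G_1, C_1\cup D) < q_\text{BayNMF}(G_1, C_1\cup D')$ while $q_\text{BayNMF}(G_2, C_2\cup D) > q_\text{BayNMF}(G_2, C_2\cup D')$, contradicting \defref{def:locality}. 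The one step needing care is the cancellation bookkeeping: I must verify that the extra isolated nodes and the shared $w,h$ really do eliminate every term except the $|V|\log\beta_c$ contribution, since it is exactly the isolation of that single $|V|$-dependent term that makes the sign flip unavoidable.
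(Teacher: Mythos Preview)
Your approach is correct and genuinely different from the paper's. The paper proves non-locality by a ring-of-cliques construction: on a ring of ten $5$-cliques the clustering that places each clique in its own cluster beats the one that merges pairs of cliques, but on the disjoint union of two such rings the comparison reverses. That argument connects directly to the resolution-limit narrative, but it relies on numerical claims about which clustering is optimal (and implicitly on an optimization over $W,H,\beta$). Your argument instead freezes the memberships and isolates the single offending term $|V|\log\beta_c$: by comparing two one-cluster clusterings that differ only in $\beta$, on graphs that differ only in the number of isolated padding nodes, you reduce the quality difference to an affine function of $|V|$ with nonzero slope, so a sign flip is forced. This is more elementary and fully analytic; the paper's construction is more vivid but less self-contained as a proof.

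One small point to tighten: you write that you may tune ``the hyperparameters $a,b$ (which only shift $R$)''. Strictly, $a$ and $b$ are part of the definition of $q_\text{BayNMF}$, so the claim should hold for any fixed $a,b$. Your argument already delivers this without touching $a,b$: for any $a,b$ and any $\beta_D>\beta_{D'}$, the crossover $|V|^\ast$ satisfies $|V|^\ast=(1+b)\beta^\ast-(a-1)$ for some $\beta^\ast\in(\beta_{D'},\beta_D)$ (with $w_1=h_1=1$), so choosing $\beta_D,\beta_{D'}$ large makes $|V|^\ast>1$; then take $|V_1|=1$ and $|V_2|>|V|^\ast$. Drop the reference to tuning $a,b$ and the proof is clean.
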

\begin{proof}
Consider a graph $G_1$, consisting of a ring of $n=10$ cliques, where each clique has $m=5$ nodes, and two edges connecting it to the adjacent cliques.

We follow \citeauthor{Psorakis2011NMF}, and use hyperparameters $a = 5$ and $b = 2$.
This choice is not essential, similar counterexamples exist for other hyperparameter values.
As might be hoped, the $q_\text{BayNMF}$-optimal clustering $C_1$ of this graph then puts each clique in a separate cluster, with a small membership for the directly connected nodes in adjacent cliques.

This clustering is certainly better than the clustering $C_2$ with $5$ clusters each consisting of two cliques, and $5$ empty clusters.

However, on a larger graph with two disjoint copies of $G_1$, the clustering with two copies of $C_2$ is better than the clustering with two copies of $C_1$.

But by locality we would have $q_\text{BayNMF}(G_1\cup G_1', C_1\cup C_1') \ge q_\text{BayNMF}(G_1\cup G_1', C_2\cup C_1')$ as well as $q_\text{BayNMF}(G_1\cup G_1', C_2\cup C_1') \ge q_\text{BayNMF}(G_1\cup G_1', C_2\cup C_2')$, where the primed variables indicate copies with disjoint nodes. So $q_\text{BayNMF}$ is not local.
\end{proof}

In the above counterexample things don't change if one uses a ring of 20 cliques instead of two disjoint rings of 10 cliques.
This is closer to the original characterization of the resolution limit by \citet{Fortunato2007ResolutionLimit}.
In a ring of 20 cliques, the solution with 10 clusters is better than the solution with 20 clusters. But it is harder to show that this violates locality.

\section{NMF as a probabilistic model}
\label{sec:probabilistic}

NMF can be seen as a maximum likelihood fit of a generative probabilistic model.
The quality function that is optimized is then the log likelihood of the model conditioned on the observed graph,
\[ q(C,G) = \log P(C|G). \]

One  assumes that there is some underlying hidden cluster structure, and the edges in the graph depend on this structure.
The clustering structure in turn depends on the nodes under consideration.
So, by Bayes rule we may decompose $P(C|G)$ as
\[
  P(C|V,A) = P(A|C,V) P(C|V) P(V) / P(V,A).
\]
The terms $P(V)$ and $P(V,A)$ are constant given the graph, so the quality function becomes
\[ q(C,G) = \log P(A|C,V) + \log P(C|V) + \kappa, \]
where $\kappa = \log P(V) - \log P(V,A)$ is a constant.
The first term is the likelihood of the edges given the clustering, the second factor is the prior probability of a clustering for a certain set of nodes.

To make the above general formulation into an NMF model, one assumes that the edge weights are distributed independently, depending on the product of the membership matrices.
Then a prior is imposed on the membership coefficients. 
Usually a conjugate prior is used, which for Gaussian likelihood has a half-normal distribution, and for Poisson likelihood has a gamma distribution.
So the simplest symmetric Gaussian NMF method would be
\begin{align*}
  a_{ij} & \sim \mathcal{N}(\hat{a}_{ij},1) \\
  \hat{a}_{ij} & = \sum_{c} h_{ci} h_{cj} \\
  h_{ci} & \sim \mathcal{HN}(0,\sigma) \\
\end{align*}

Which leads to the quality function
\begin{align*}
  q(C,G) &= -\frac12 \sum_{i,j \in V} (a_{ij} - \hat{a}_{ij})^2
          - \frac1{2\sigma^2} \sum_{c \in C} \sum_{i \in V} h_{ci}^2
       \\&+ |V|^2 \log \sqrt{2 \pi}
          + |C||V| \log\sqrt{\pi \sigma^2/2}
          .
\end{align*}
This is a regularized variant of symmetric NMF discussed previously.

Such a model implicitly assumes a fixed number of clusters; and the corresponding quality function will not be local if the number of clusters is not fixed.
Intuitively, this happens because the model has to `pay' the normalizing constant of the prior distribution for each $h_{ci}$, the number of which is proportional to the number of clusters.

\citeauthor{Psorakis2011NMF} method also stems from a probabilistic model. They use a Poisson likelihood and a half-normal prior. Note that these are not conjugate. For finding the maximum likelihood solution conjugacy is not important. Using a conjugate prior becomes important only when doing variational Bayesian inference or Gibbs sampling \citep{Cemgil2009BayesianNMF}.

To determine the number of clusters, \citeauthor{Psorakis2011NMF} puts a gamma hyperprior on the inverse variance $\beta$. This allows a sharply peaked distribution on $w_c$ and $h_c$ when the support of a cluster is empty. The model is
\begin{align*}
  a_{ij} & \sim \text{Poisson}(\hat{a}_{ij}) \\
  \hat{a}_{ij} & = \sum_{c} h_{ci} w_{cj} \\
  h_{ci} & \sim \mathcal{HN}(0,1/\sqrt\beta_c) \\
  w_{ci} & \sim \mathcal{HN}(0,1/\sqrt\beta_c) \\
  \beta_c & \sim \text{Gamma}(a,b) \\
\end{align*}

As shown in \secref{sec:varying}, the corresponding quality function is not local.
The problems stem from the priors on $W$, $H$ and $\beta$, which depend on the number of nodes and clusters.
We will next try to find a different prior that is local.

\subsection{A local prior}

To get a local quality function from a probabilistic model, that doesn't assume a fixed number of clusters, we clearly need a different prior.
The approach we take will be to construct an additive quality function, which is local by \thmref{thm:additive-local}.




First assume as above that the likelihoods of the edges are independent and depending on the product of membership degrees,
that is $P(A|C,V) = \prod_{ij} P(a_{ij}|\hat{a}_{ij})$.
This fits nicely into the fourth term, $q_\text{edge}$, of an additive quality function.



Without loss of generality we can split the prior into two parts.
First the support of each cluster is determined, and based on this support the membership coefficients are chosen.
If we define $S=\{\support(c)|c \in C\}$\footnote{$S$ should be seen as a multiset, since multiple clusters can have the same support.}, then this means that
\[
  P(C|V) = P(C|V,S) P(S|V).
\]
A reasonable choice for the first term $P(C|V,S)$ is to assume that the clusters are independent, and that the membership coefficients inside each cluster are also independent, so
\begin{align*}
  C &= \{C_s \mid s \in S\} \\
  P(C_s|V,s) &= \prod_{c \in C} \Bigl( \prod_{i \in s} P(h_{ci}) \prod_{i \in V \setminus s} \delta(h_{ci},0) \Bigl)
  ,
\end{align*}
where $\delta$ is the Kronecker delta, which forces $h_{ci}$ to be zero for nodes not in $s$.
The logarithm of $P(C|V,S)$ is a sum of terms that depend only on a single cluster, so it can be encoded in the $q_\text{clus}$ term of an additive quality function.

Now consider $P(S|V)$.
If we know nothing about the nodes, then the two simplest aspects of $S$ we can look at are: (1) how many clusters cover each node, and (2) how many nodes are in each cluster.
The only local choice for (1) is to take the number of clusters that cover node $i$,  $n_i=\#\{s \in S \mid i \in s \}$, be independent and identically distributed according to some $f(n_i)$.
While for (2), the probability of a cluster $s \in S$ must be independent of the other clusters.
And since we have no information about the nodes, the only property of $s$ we can use is its size.
This suggests a prior of the form
\[
  P(S|V) = \frac{1}{Z} \prod_{i \in V} f(n_i)  \prod_{s\in S} g(|s|),
\]
where $n_i=|\{s \in S \mid i \in s \}|$ is the number of clusters covering node $i$.
The term $f(n_i)$ is local to each node, and can be encoded in $q_\text{node}$.
The term $g(|s|)$ is local to each cluster, and can therefore be encoded in $q_\text{clus}$.
The normalizing constant $Z$ depends only on $V$, and so it can be encoded in $q_\text{graph}$.

If we take $f(n_i) = \indicator[n_i=1]$ and $g(|s|)=(|s|-1)!$, then the prior on $S$ is exactly a Chinese Restaurant Process  \citep{Pitman2002}.
If we relax $f$, then we get a generalization where nodes can belong to multiple clusters.
Another choice is $f(n_i) = \indicator[n_i=1]$ and $g(|s|)=1$. Then the prior on $S$ is the flat prior over partitions, which is commonly used for hard clustering.

Yes another choice is to put a Poisson prior on either the number of clusters per node or the number of nodes per cluster. That is, take $f(n_i) = \lambda^{n_1}/n_1! \exp{-\lambda}$ for some constant $\lambda$, or do the same for $g$. This parameter allows the user to tune the number or size of clusters that are expected a-priori.

To summarize, we obtain a local quality function of the form
\begin{align*}
  q(G,C) &= \sum_{i \in V} \log f(|\{ c \in C \mid i \in \support(c) \}|)
      \\ &+ \sum_{c \in C} \log g(|\support(c)|)
      \\ &+ \sum_{c \in C} \sum_{i \in \support(c)} \log P(h_{ci})
      \\ &+ \sum_{i,j \in V} \log P(a_{ij} \mid \hat{a}_{ij})
      \\ &+ \kappa,
\end{align*}
which has four independent parts:
a score for a node being in a certain number of clusters,
a score for the size of each cluster,
a prior for each non-zero membership coefficient,
and the likelihood of an edge $a_{ij}$ given the $\hat{a}_{ij}$.



The discrete nature of this quality function makes it harder to optimize.
It is not clear if the multiplicative gradient algorithm that is commonly employed for NMF \citep{LeeSeung2000algorithmsForNMF}
can be adapted to deal with a prior on the support of clusters.
On the other hand, it might become possible to use discrete optimization methods, such as the successful Louvain method used for modularity maximization.

\subsection{Analysis of the quality functions on two types of graphs}

We will now investigate the local quality function proposed in the previous section.

First consider the original resolution limit model \citep{Fortunato2007ResolutionLimit}, which consists of a ring of cliques.
Two possible clusterings of a part of such a ring are illustrated in \figref{fig:ring-of-cliques}.

\begin{figure}
  \centering
  \begin{tikzpicture}
    [thick
    ,node/.style={circle,draw,inner sep=0,minimum size=3mm}
    ,ynode/.style={node,fill=black!20}
    ,nnode/.style={node,dotted}
    ]
    \def\circlescale{1.0}
    \def\circlesize{6}
    \begin{scope}[]
      \foreach \x in {1,...,\circlesize} {
        \node (a\x) [ynode] at (0.6*360/\circlesize+\x*360/\circlesize:\circlescale) {};
        \foreach \y in {1,...,\circlesize} {
          \ifnum \x=\y \breakforeach \fi
          \draw (a\x) -- (a\y);
        }
      }
    \end{scope}
    \begin{scope}[xshift=3cm]
      \foreach \x in {1,...,\circlesize} {
        \node (b\x) [ynode] at (0.4*360/\circlesize+\x*360/\circlesize:\circlescale) {};
        \foreach \y in {1,...,\circlesize} {
          \ifnum \x=\y \breakforeach \fi
          \draw (b\x) -- (b\y);
        }
      }
    \end{scope}
    \node [nnode] (c1) at (-3/1.5*\circlescale,-1.9/1.5*\circlescale) {};
    \node [nnode] (c2) at (3+3/1.5*\circlescale,-1.9/1.5*\circlescale) {};
    \draw (a4) -- (b4);
    \draw (a4) -- (c1);
    \draw (b4) -- (c2);
    
    \begin{scope}[yshift=0cm]
      \begin{scope}[rotate=0]
      \bracket[blue]{-1.1*\circlescale}{1.2*\circlescale}{-1.9}
      \end{scope}
      \begin{scope}[xshift=3cm]
      \bracket[blue]{-1.2*\circlescale}{+1.1*\circlescale}{-1.9}
      \end{scope}
      \bracket[blue]{-0.1*\circlescale}{3+0.1*\circlescale}{-2.2}
    \end{scope}
    \bracket[red]{-1.1*\circlescale}{3+1.1*\circlescale}{-2.8}
    \bracket[dotted]{-3/1.5*\circlescale-0.2}{0.2*\circlescale+0.2}{-3.5}
    \bracket[dotted]{3-0.2*\circlescale-0.2}{3+3/1.5*\circlescale+0.2}{-3.5}
    \node at (4.6,-2.0) {$D_1$};
    \node at (4.6,-2.75) {$D_2$};
    \node at (5.57,-3.40) {$C$};
  \end{tikzpicture}
  \caption{
    (Color online)
    Two possible clusterings in a subgraph of a ring of cliques.
    In the first clustering ($D_1$, blue), the two cliques are in separate clusters, and there is a third cluster for the edge between them.
    In the second clustering ($D_2$, red) two cliques are put into a single cluster.
    A third possibility is to include the middle edge in a cluster together with one of the two cliques.
    A clustering of this entire subgraph will also include two clusters covering the connecting edges ($C$, dotted).
  }
  \label{fig:ring-of-cliques}
\end{figure}
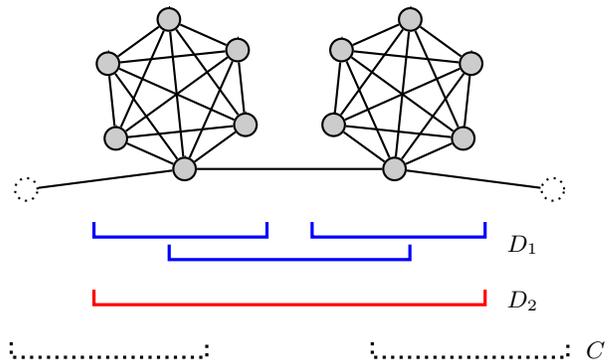

If a quality function is local, then we know that if $D_1 \cup C$ is a better clustering than $D_2 \cup C$ in this subgraph, then $D_1$ will also be better than $D_2$ as part of a larger graph. In other words, if the cliques are clustered correctly in a small ring, than this is true regardless of the number of cliques in the ring (unless a clustering with very large clusters is suddenly better).

We have performed experiments with the prior from the previous section, to see what the optimal clustering will be in practice.
We use a Poisson likelihood, a half normal prior on the supported membership coefficients (with precision $\beta=1$), a Poisson prior on the number of clusters-per-node (with $\lambda=1$) and a flat prior on the number of nodes per cluster.
To find the optimal clustering we use a general purpose optimization method, combined with a search over the possible supports of the clusters.

\Figref{fig:ring-of-cliques-results-num} shows that, as expected, the optimal solution is always to have one cluster per clique when using the local quality function. For comparison we also looked at the simpler non-local NMF method without a prior on the support. In that case the optimal solution depends strongly on the prior on membership coefficients $\beta$. If $\beta$ is small, then there is a penalty for every zero in the membership matrix, and hence a penalty on the number of clusters that increases with the number of nodes. If $\beta$ is large enough, then the probability density $p(0)>1$, and this penalty becomes a `bonus'. In that case adding even an empty cluster would improve the quality, and the optimal clustering has an infinite number of clusters.

The method of \citeauthor{Psorakis2011NMF} has the same resolution limit problem, but to an even larger extent. To automatically determine the number of clusters, this method keeps the actual number of clusters fixed to a large upper bound, for which the authors take the number of nodes. This means that there are very many clusters which will be empty in the optimal solution. For these empty clusters, the parameter $\beta_c$ becomes very large. And as said in the previous paragraph, this results in a bonus for empty clusters. Hence the method will tend to maximize the number of empty clusters, which results in a few large clusters actually containing the nodes.
For this experiment we used the prior $\beta_c \sim \text{Gamma}(5,2)$, as is also done in the code provided by \citeauthor{Psorakis2011NMF}.
Note that the jaggedness in the plot is due to the fact a ring of $n$ cliques can not always be divided evenly into $m$ clusters of equal size. Between 24 and 50 cliques, the optimal number of clusters is always 8 or 9.

\Figref{fig:ring-of-cliques-results-lambda} shows the influence of the parameter $\lambda$ of the Poisson prior that we put on the number of clusters per node.
When $\lambda$ becomes smaller, it becomes \emph{a priori} more likely for a node to be in only a single cluster, or in fact, to be in no cluster at all.
It actually requires a quite strong prior to get two cliques to merge into one cluster, when using $5$-cliques, we need $\lambda$ to be smaller than approximately $10^{-5}$. 

%
%
%

\pgfplotscreateplotcyclelist{objective comparison}{%
  red,every mark/.append style={fill=.!80!red},mark=*\\%
  green!60!black,every mark/.append style={fill=.!80!black},mark=triangle*\\%
  blue,every mark/.append style={fill=.!80!blue,scale=0.6},mark=square*\\%
  black,mark=star\\%
}
\pgfplotscreateplotcyclelist{objective comparison2}{%
  blue,every mark/.append style={fill=.!80!blue,scale=0.6},mark=square*\\%
}

\begin{figure}[t]
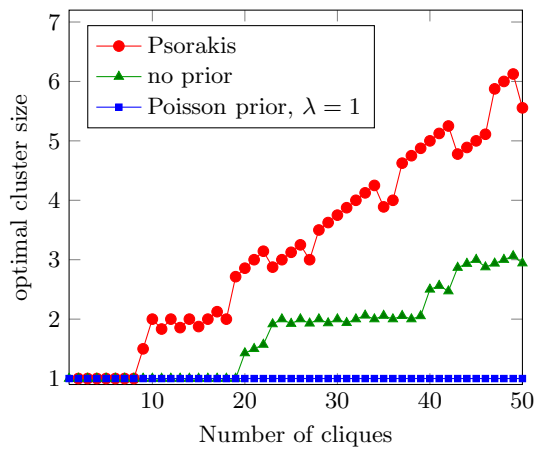

  \centering
  \begin{tikzpicture}
    \begin{axis}[
       xlabel={Number of cliques},ylabel={optimal cluster size},
       xmin=1,xmax=50,ymin=0.9,ymax=7.2,enlargelimits=false,
       legend cell align=left,
       cycle list name=objective comparison,
       ytick={1,2,3,4,5,6,7},
       width=0.88\linewidth,
       ]
     \pgfplotsset{every axis legend/.append style={at={(0.04,0.96)},anchor=north west}}
     \input{graph-ring-of-clique-psorakis.tex}
     \input{graph-ring-of-clique-lf00.tex} 
     \input{graph-ring-of-clique-lf1.tex} 
    \end{axis}
  \end{tikzpicture}
  \caption{
    (Color online)
    Optimal cluster size (average number of cliques per cluster) in a ring of $n$ $5$-cliques, when varying the number $n$ of cliques.
  }
  \label{fig:ring-of-cliques-results-num}
\end{figure}
\begin{figure}[t]
  \begin{tikzpicture}
    \begin{axis}[
       xlabel={$-\log_{10} \lambda$},ylabel={optimal cluster size},
       xmin=1,xmax=50,ymin=0.8,ymax=11.2,enlargelimits=false,
       cycle list name=objective comparison2,
       width=0.88\linewidth,
       ]
     \input{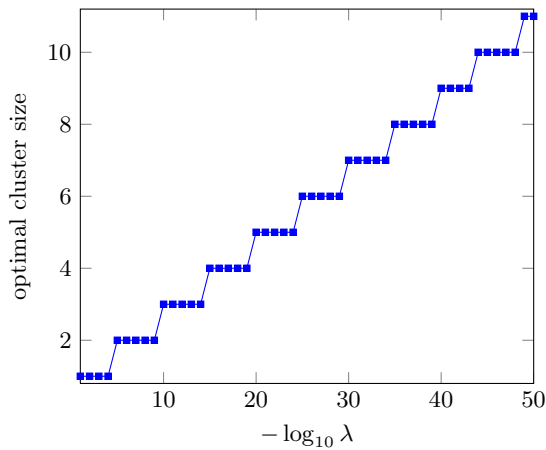} 
    \end{axis}
  \end{tikzpicture}
  \caption{
    (Color online)
    Optimal cluster size (average number of cliques per cluster) in a ring of $5$-cliques, when varying the $\lambda$ parameter of the Poisson prior on the number of clusters per node. The number of cliques in the ring doesn't matter because of locality.
  }
  \label{fig:ring-of-cliques-results-lambda}
\end{figure}


A ring of cliques is not a realistic model of real world graphs, since on most graphs the clustering is not as clear cut as it is there. The clustering problem can be made harder by removing edges inside the cliques, which are then no longer cliques, and better called modules; or by adding more edges between the modules.

We consider such a generalization, where there are two modules connected by zero or more edges. We then generated random modules and random between module edges. The two modules are either clustered together in one big cluster, or separated. In \figref{fig:realistic-results} we show simulation results of such a more realistic situation.
As we can see, as the number of between module edges increases, or the number of within module edges decreases, it becomes more likely to combine the two modules into one cluster. At the threshold between the two situations, the number of between module edges is roughly equal to the number of within module edges.
This matches the notion of a \emph{strong community}, which is defined by \citet{Radicchi2004} as a set of nodes having more edges inside the cluster than edges leaving the cluster.
A theoretical justification of these empirical results is beyond the scope of this work.

\begin{figure}[t]
  \centering
  \begin{tikzpicture}
    \begin{axis}[
       xlabel={Within module edges},
       ylabel={Between module edges},
       view={0}{90},
       colormap={my}{rgb(0cm)=(1,1,1); rgb(0.5cm)=(0.5,0.65,0.8); rgb(1cm)=(0,0.1,0.2)},
       width=0.88\linewidth,
       shader=flat corner,
       ymax=50.5,
       ]
       \input{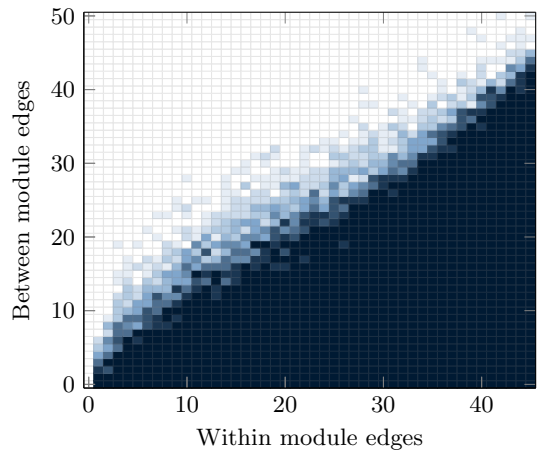}
    \end{axis}
  \end{tikzpicture}
  \caption{
    (Color online)
    Varying the number of within and between module edges.
    The modules each have $10$ nodes.
    A Poisson prior on the number of clusters per node ($\lambda=1$) was used.
    We consider two possible clusterings: (a) A solution with three clusters, two clusters for the two modules and one cluster for the between module edges. And (b) the solution with a single cluster containing all nodes.
    The color in the plot indicates which clustering has a higher quality. In the dark region, the clustering (a) with three clusters is better. In the light region, the solution (b) with a single cluster is better. Results are the average over 10 random graphs with the given number of edges.
  }
  \label{fig:realistic-results}
\end{figure}

\section{Conclusion}
To our knowledge, this work is the first to investigate  resolution-limit-free and local NMF quality functions for graph clustering. 
We gave a characterization of  a class of good (i.e. local) additive quality functions for graph clustering that provides a modular interpretation of NMF  for graph clustering. The definitions of locality and of additive quality functions are general, and can also be applied to other soft clustering methods. We proposed the class of  local probabilistic NMF quality functions.  The design and assessment of efficient algorithms for optimizing these quality functions remains to be investigated.


Results of this paper provide novel insights on NMF for hard clustering, 
on the resolution limit of Bayesian NMF for soft clustering,  and on the beneficial role of a local prior in  probabilistic formulations of NMF.
 




\section*{Acknowledgments}
This work has been partially funded by the Netherlands Organization for Scientific Research (NWO) within the NWO project 612.066.927.


\bibliography{clustering}

\end{document}